\pgfplotsset{compat=newest}
\newcommand{\md}{\bm{D}}
\newcommand{\mm}{\bm{M}} 
\newcommand{\mx}{\bm{X}}       
\newcommand{\my}{\bm{Y}}       
\newcommand{\mi}{\bm{I}}       
\newcommand{\mb}{\bm{B}}       
\newcommand{\ms}{\bm{S}}
\newcommand{\ma}{\bm{A}}
\newcommand{\vx}{\bm{x}}
\newcommand{\vy}{\bm{y}}
\newcommand{\vu}{\bm{u}}
\newcommand{\Dc}{\mathcal{D}}
\newcommand{\Sc}{\mathcal{S}}
\newcommand{\norm}[1]{\|{#1}\|}
\newcommand{\reals}{\mathbb{R}}
\newcommand{\ld}{\text{ld}}
\newcommand{\algo}{\textsc{Gmml}\xspace}
\DeclareMathOperator{\trace}{tr}
\newtheorem{theorem}{Theorem}
\newtheorem{proposition}[theorem]{Proposition}
\theoremstyle{definition}
\newtheorem{definition}[theorem]{Definition}
\theoremstyle{remark}
\begin{document}

\tikzset{mark options={mark size=5, line width=3pt}}
%
\title{Geometric Mean Metric Learning}
\author{Pourya Habib Zadeh\quad Reshad Hosseini\quad Suvrit Sra\IEEEcompsocitemizethanks{\IEEEcompsocthanksitem Pourya Habib Zadeh and Reshad Hosseini are with the School of ECE, College of Engineering, University of Tehran.\protect\\
E-mail: \{p.habibzadeh, reshad.hosseini\}@ut.ac.ir}
\IEEEcompsocitemizethanks{\IEEEcompsocthanksitem Suvrit Sra is with the Massachusetts Institute of Technology.\protect\\
E-mail: suvrit@mit.edu}
}


\maketitle

\begin{abstract}
  We revisit the task of learning a Euclidean metric from data. We approach this problem from first principles and formulate it as a surprisingly simple optimization problem. Indeed, our formulation even admits a closed form solution. This solution possesses several very attractive properties: (i) an innate geometric appeal through the Riemannian geometry of positive definite matrices; (ii) ease of interpretability; and (iii) computational speed several orders of magnitude faster than the widely used LMNN and ITML methods. Furthermore, on standard benchmark datasets, our closed-form solution consistently attains higher classification accuracy.
\end{abstract}

%

\IEEEpeerreviewmaketitle


\section{Introduction}
Many machine learning algorithms require computing distances between input data points, be it for clustering, classification, or search. Selecting the distance measure is, therefore, an important concern; though the answer is task specific. When supervised or weakly supervised information is available, selection of the distance function can itself be cast as a learning problem called ``metric learning''~\cite{kulis2012metric,weinberger2009distance}.

In its most common form, metric learning seeks to learn a Euclidean metric. An abstract approach is to take input data in $\reals^n$ and learn a linear map $\Phi : \reals^n \to \reals^m$, so that the Euclidean distance $\|\Phi(\vx)-\Phi(\vy)\|$ can be used to measure the distance between points $\vx, \vy \in \reals^n$. More generally, the map $\Phi$ can also be nonlinear. 

The problem of learning linear maps was introduced in~\cite{xing2002distance} as ``Mahalanobis metric learning.'' Since then metric learning has witnessed a sequence of improvements both in modeling and algorithms (see related work). More broadly, the idea of linearly transforming input features is a bigger theme across machine learning and statistics; encompassing whitening transforms, linear dimensionality reduction, Euclidean metric learning, and more~\cite{kulis2012metric,cunningham2015linear}.

We revisit the task of learning a Euclidean metric. Like most Euclidean metric learning methods, we also seek to learn a \emph{Mahalanobis distance}\footnote{This is actually a squared distance. The true metric is $\sqrt{d_{\ma}}$; but in accord with metric learning literature we call~\eqref{MD} a distance.}
\begin{equation}
  \label{MD}
  d_{\ma}(\vx,\vx') = (\vx-\vx')^{T}\ma(\vx-\vx'),
\end{equation}
where $\vx, \vx' \in \mathbb{R}^d$ are input vectors, and $\ma$ is a $d\times d$ real, symmetric positive definite (SPD) matrix\footnote{Do not confuse SPD with positive semi-definite matrices.}. Like other metric learning approaches we also assume weak-supervision, which is provided through the sets of pairs
\begin{align*}
  \Sc &:= \{ (\vx_i,\vx_j) \mid \vx_i \; \text{and} \; \vx_j \; \text{are in the same class} \}\\
  \Dc &:= \{ (\vx_i,\vx_j) \mid \vx_i \; \text{and} \; \vx_j \; \text{are in different classes} \}.
\end{align*}
Unlike other Euclidean metric learning methods, however, we follow a much simpler yet fresh new approach. 

Specifically, we make the following main contributions:
\begin{list}{--}{\leftmargin=1em}\vspace*{6pt}
  \setlength{\itemsep}{0pt}
\item \textbf{Formulation.} We formulate Euclidean metric learning from first principles following intuitive geometric reasoning; we name our setup ``Geometric Mean Metric Learning'' (\algo) and cast it as an \emph{unconstrained} smooth, strictly convex optimization problem. \vspace*{4pt}
\item \textbf{Solution \& insights.} We show that our formulation admits a closed form solution, which not only also enjoys  connections to the Riemannian geometry of SPD matrices (and thus explains the name \algo) but also has important empirical consequences. \vspace*{4pt}
\item \textbf{Validation.} We consider multi-class classification using the learned metrics, and validate \algo by comparing it against widely used metric learning methods. \algo runs up to three orders of magnitude faster while consistently delivering equal or higher classification accuracy.
\end{list}

%

\subsection{Related work}
\label{sec.related}
We recall below some related work to help place \algo in perspective. We omit a discussion of nonlinear methods, and other variations of the basic Euclidean task outlined above; for these, we refer the reader to both kernelized metric learning~\cite{davis2007information} and other techniques as summarized in the recent surveys of~\citet{kulis2012metric} and~\citet{bellet2013survey}.

Probably the earliest work to formulate metric learning is~\cite{xing2002distance}, sometimes referred to as MMC. This method minimizes the sum of distances over similar points while trying to ensure that dissimilar points are far away from each other. Using the sets $\Sc$ and $\Dc$, MMC solves the optimization problem 
\begin{equation}
\label{eq:1}
  \begin{split}
    \min_{\ma \succeq 0} \quad & \sum_{(\vx_i , \vx_j) \in\mathcal{S}}{d_{\ma}(\vx_i,\vx_j)} \\
    \text{such that} \quad & \sum_{(\vx_i , \vx_j) \in \mathcal{D}}{\sqrt{d_{\ma}(\vx_i,\vx_j)}} \geq 1.
\end{split}
\end{equation}
\citet{xing2002distance} use $\sqrt{d_{\ma}}$ instead of the distance $d_{\ma}$  because under $d_{\ma}$, problem~\eqref{eq:1} has a trivial rank-one solution. To optimize~\eqref{eq:1}, they use a gradient-descent algorithm combined with a projection onto the set of positive semi-definite matrices.
The term $\sum_{(\vx_i , \vx_j) \in \mathcal{S}}{d_{\ma}(\vx_i,\vx_j)}$ is also used in the other metric learning methods like LMNN~\cite{weinberger2009distance} and MCML~\cite{globerson2005metric} as a part of their cost functions.


Information-Theoretic Metric Learning (ITML)~\cite{davis2007information}, aims to satisfy the similarity and dissimilarity constraints while staying as ``close'' as possible to a predefined matrix. This closeness is measured using the \emph{LogDet divergence} $D_{\ld}(\ma,\ma_0) := \trace(\ma \ma_0^{-1}) - \log\det(\ma \ma_0^{-1}) - d$; and ITML is formulated as follows:
\begin{equation}
\begin{split}
\min_{\ma \succeq 0} \quad & D_{\ld}(\ma,\ma_0) \\
\text{such that} \quad &  d_{\ma}(\vx,\vy) \leq u, \;\;\;\; (\vx , \vy) \in \mathcal{S},\\
& d_{\ma}(\vx,\vy) \geq l, \;\;\;\; (\vx , \vy) \in \mathcal{D},
\end{split}
\end{equation}
where $u, v \in \mathbb{R}$ are threshold parameters, chosen to encourage distance between similar points to be small and between dissimilar points be large. 
Similar to ITML,~\citet{meyer2011regression} propose the formulation
\begin{equation}
\label{eq:10}
\begin{split}
\min_{\ma \succeq 0} \quad & \sum_{(\vx_i , \vx_j) \in \mathcal{S}}{\max \bigl(\;0\;,\;l-d_{\ma}(\vx_i,\vx_j)\; \bigr)^2} \\
&+ \sum_{(\vx_i , \vx_j) \in \mathcal{D}}{\max \bigl(\;0\;,\;d_{\ma}(\vx_i,\vx_j)-u\; \bigr)^2},
\end{split}
\end{equation}
for which they use Riemannian techniques to minimize the cost function. Although~\eqref{eq:10} does not use any regularizer, the authors observed good classification performance.

There exist several attempts for achieving high scalability with both the dimensionality and the number of constraints in the metric learning methods; some examples include~\cite{shalev2004online,jain2009online,weinberger2008fast,shalit2012online}. 

However, the focus of our paper is different: we are concerned with the formulation of Euclidean metric learning. Remarkably, our new formulation admits a closed form solution, which turns out to be 3 orders of magnitude faster than established competing methods.


\section{\algo: formulation and solution}

As discussed above, the guiding idea behind Euclidean metric learning is to ultimately obtain a metric that yields ``small'' distances for similar points and ``big'' ones for dissimilar ones. Different metric learning methods try to fulfill this guideline either implicitly or explicitly.


The main idea that we introduce below is in how we choose to include the impact of the dissimilar points. Like one of earliest metric learning methods MMC, we propose to find a matrix $\ma$ that decreases the sum of distances over all the similar points, but unlike all previous methods, instead of treating dissimilar points asymmetrically, we  propose to measure their interpoint distances using $\ma^{-1}$, and to add their contribution to the overall objective. More precisely, we propose the following novel objective function:
\begin{equation}
\label{eq:basic}
\sum_{(\vx_i , \vx_j) \in \mathcal{S}}{d_{\ma} (\vx_i,\vx_j)} \quad +  \sum_{(\vx_i , \vx_j) \in \mathcal{D}}{d_{{\ma}^{-1}} (\vx_i,\vx_j)}.
\end{equation}
In the sequel, we write $\hat{d}_{\ma} \equiv d_{\ma^{-1}}$ for brevity.

\subsection{Insights} 
Let us provide some intuition behind our proposed objective~\eqref{eq:basic}. These insights are motivated by the idea that we \emph{may} increase the Mahalanobis distance between dissimilar points $d_{\ma}(\vx,\vy)$ by decreasing $\hat{d}_{\ma}(\vx,\vy)$. The first idea is the simple observation that the distance $d_{\ma}(\vx,\vy)$ increases monotonically in $\ma$, whereas the distance $\hat{d}_{\ma}(\vx,\vy)$ decreases monotonically in $\ma$. This observation follows from the following well-known result: 
\begin{proposition}
  Let $\ma, \mb$ be (strictly) positive definite matrices such that $\ma \succ \mb$. Then, $\ma^{-1} \prec \mb^{-1}$.
\end{proposition}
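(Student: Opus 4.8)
The plan is to prove the operator-monotonicity of the inverse on positive definite matrices, i.e.\ that $\ma \succ \mb \succ 0$ implies $\mb^{-1} \succ \ma^{-1} \succ 0$. The cleanest route is a two-step reduction: first handle the special case $\mb = \mi$, then deduce the general case by a congruence transformation.

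\medskip

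First I would reduce to the case $\mb = \mi$. Since $\mb \succ 0$, it has a positive definite square root $\mb^{1/2}$ with inverse $\mb^{-1/2}$. Congruence by $\mb^{-1/2}$ preserves the positive definite order (because $\mx \succ \my \iff \ms^T \mx \ms \succ \ms^T \my \ms$ for invertible $\ms$), so $\ma \succ \mb$ becomes $\mb^{-1/2}\ma\mb^{-1/2} \succ \mi$. Call this matrix $\mG$; it is SPD and $\mG \succ \mi$. The claim $\mb^{-1} \succ \ma^{-1}$ is, after the same congruence (by $\mb^{1/2}$ on the inverses), equivalent to $\mi \succ \mG^{-1}$. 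So everything reduces to: if $\mG$ is SPD and $\mG \succ \mi$, then $\mG^{-1} \prec \mi$.

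\medskip

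Second, I would settle this reduced claim via the spectral theorem. Write $\mG = \mU \mlambda \mU^T$ with $\mU$ orthogonal and $\mlambda = \Diag(\lambda_1,\dots,\lambda_d)$. The hypothesis $\mG \succ \mi$ is equivalent to $\mlambda \succ \mi$, i.e.\ $\lambda_k > 1$ for all $k$ (diagonalizing $\mG - \mi$). Then $\lambda_k > 1 \iff 1/\lambda_k < 1$ for each $k$, which says $\mlambda^{-1} \prec \mi$, and conjugating back by $\mU$ gives $\mG^{-1} = \mU \mlambda^{-1}\mU^T \prec \mi$. Reversing the congruence of the first step then yields $\mb^{-1} \succ \ma^{-1}$, and positive definiteness of both inverses is automatic.

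\medskip

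I do not expect a genuine obstacle here; the result is standard (operator monotonicity of $t \mapsto -1/t$ on $(0,\infty)$). The only point requiring a little care is keeping the direction of the congruence-induced inequalities straight — congruence by $\ms$ and by $\ms^{-1}$ are inverse operations on the cone, so one must apply them consistently to both sides and to the inverses. A slicker but less elementary alternative would be to note $\ma^{-1} - \mb^{-1} = \ma^{-1}(\mb - \ma)\mb^{-1}$ and symmetrize, but the eigenvalue argument above is the most transparent and is what I would write.
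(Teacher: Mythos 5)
Your proof is correct and complete. Note, however, that the paper itself gives no proof of this proposition: it is stated as a well-known fact (operator antitonicity of the inverse) and used only to motivate the objective, so there is no argument in the paper to compare against. Your two-step route — congruence by $\mb^{-1/2}$ to reduce to $\mG \succ \mi$, then the spectral theorem to get $\mG^{-1} \prec \mi$, then undoing the congruence — is the standard textbook argument (cf.\ Bhatia's book cited in the paper), and every step, including the preservation of the Loewner order under congruence by an invertible matrix, is justified as you state it.
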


The second idea (which essentially reaffirms the first) is that the gradients of $d_{\ma}$ and $\hat{d}_{\ma}$ point in nearly opposite directions. Therefore, infinitesimally decreasing $d_{\ma}$ leads to an increase in $\hat{d}_{\ma}$. Indeed, the (Euclidean) gradient of $d_{\ma}(\vx,\vy)$ is
\begin{equation*}
  \frac{\partial d_{\ma}}{\partial \ma} = \vu\vu^{T},
\end{equation*}
where $\vu = \vx-\vy$; this is a rank-one positive semi-definite matrix. The gradient of $\hat{d}_{\ma}(\vx,\vy)$ is
\begin{equation*}
  \frac{\partial \hat{d}_{\ma}}{\partial \ma} = -\ma^{-1}\vu\vu^{T} \ma^{-1} ,
\end{equation*}
which is a rank-one matrix with a negative eigenvalue. It is easy to see that the inner product of these two gradients is negative, as desired.




\subsection{Optimization problem and its solution}
In the following, we further simplify the objective in~\eqref{eq:basic}. Rewriting the Mahalanobis distance using traces, we turn~\eqref{eq:basic} into the optimization problem
\begin{equation}
  \label{eq:2}
  \begin{split}
    \min_{\ma \succ 0} \quad &\sum_{(\vx_i , \vx_j) \in \mathcal{S}}{\trace(\ma(\vx_i-\vx_j)(\vx_i-\vx_j)^{T})} \\
    &+  \sum_{(\vx_i , \vx_j) \in \mathcal{D}}{\trace(\ma^{-1}(\vx_i-\vx_j)(\vx_i-\vx_j)^{T})}.
  \end{split}
\end{equation}
We define now the following two important matrices:
\begin{equation}
  \label{eq:3}
  \begin{split}
    \ms := \sum_{(\vx_i , \vx_j) \in \mathcal{S}}{(\vx_i-\vx_j)(\vx_i-\vx_j)^{T}}, \\
    \md := \sum_{(\vx_i , \vx_j) \in \mathcal{D}}{(\vx_i-\vx_j)(\vx_i-\vx_j)^{T}},
  \end{split}
\end{equation}
which denote the similarity and dissimilarity matrices, respectively. The matrices $\ms$ and $\md$ are scaled second sample moments of the differences between similar points and the differences between dissimilar points. In the rest of this subsection, we assume that $\ms$ is a SPD matrix, which is a realistic assumption in many situations. For the cases where $\ms$ is just a positive semi-definite matrix, the regularized version can be used; we treat this case in Section~\ref{sec:regul}.

Using~\eqref{eq:3}, the minimization problem~\eqref{eq:2} yields the basic optimization formulation of \algo, namely
\begin{equation}
  \label{firstcost}
  \min_{\ma \succ 0} \quad h(\ma) := \trace(\ma \ms) + \trace(\ma^{-1}\md).
\end{equation}

The \algo cost function~\eqref{firstcost} has several remarkable properties, which may not be apparent at first sight. Below we highlight some of these to help build greater intuition, as well as to help us minimize it.

The first key property of $h(\ma)$ is that it is both strictly convex and strictly geodesically convex. Therefore, if $\nabla h(\ma)=0$ has a solution, that solution will be the global minimizer. Before proving this key property of $h$, let us recall some material that is also helpful for the remainder of the section.

Geodesic convexity is the generalization of ordinary (linear) convexity to (nonlinear) manifolds and metric spaces~\cite{athan,rapsak}. On Riemannian manifolds, geodesics are curves with zero acceleration that at the same time locally minimize the Riemannian distance between two points. The set of SPD matrices forms a Riemannian manifold of nonpositive curvature~\cite[Ch.~6]{bhatia07}.
We denote this manifold by $\mathbb{S}_{+}$. The geodesic curve joining $\ma$ to $\mb$ on the SPD manifold is denoted by
\begin{equation*}
\ma \sharp_t \mb = \ma^{1/2}  \bigl ( \ma^{-1/2} \mb \ma^{-1/2}  \bigr )^t \ma^{1/2}, \qquad t \in [0,1].
\end{equation*}
This notation for geodesic is customary, and in the literature, $\gamma(t)$ is also used. Moreover, the entire set of SPD matrices is geodesically convex, as there is a geodesic between every two points in the set. On this set, one defines geodesically convex functions as follows.
\begin{definition}
A function $f$ on a geodesically convex subset of a Riemannian manifold is \emph{geodesically convex}, if for all points $\ma$ and $\mb$ in this set, it satisfies
\begin{equation*}
f( \ma \sharp_t \mb) \leq tf(\ma) + (1-t) f(\mb),\quad t\in[0,1].
\end{equation*}
If for $t \in (0,1)$ the above inequality is strict, the function is called strictly geodesically convex. 
\end{definition}

We refer the reader to~\cite{sra15} for more on geodesic convexity for SPD matrices. We are ready to state a simple but key convexity result.
\begin{theorem}
  \label{thm:gcvx}
  The cost function $h$ in~\eqref{firstcost} is both strictly convex and strictly geodesically convex on the SPD manifold.
\end{theorem}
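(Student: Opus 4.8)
The plan is to prove the two assertions separately, and in each case to split the cost into its linear part $h_1(\ma):=\trace(\ma\ms)$ and its inverse part $h_2(\ma):=\trace(\ma^{-1}\md)$. Throughout I take $\ms\succ0$ (as assumed in the text) and, for the Euclidean strictness claim, also $\md\succ0$ (the generic full‑rank case); if $\md$ is merely positive semidefinite, the arguments below still yield ordinary and geodesic convexity, and because $h_1$ alone is strictly geodesically convex, $h$ remains strictly geodesically convex even then — only the strictness of the Euclidean part genuinely uses $\md\succ0$.

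\textbf{Euclidean convexity.} The term $h_1$ is affine, hence convex. For $h_2$ I would compute the second directional derivative: differentiating $t\mapsto\trace(\md(\ma+tH)^{-1})$ twice at $t=0$ for a symmetric $H$ gives $2\trace(\md\ma^{-1}H\ma^{-1}H\ma^{-1})$. Setting $C:=\md^{1/2}\ma^{-1}H\ma^{-1/2}$, this equals $2\trace(CC^{T})=2\norm{C}_F^2\ge0$, and it vanishes only if $C=0$, i.e. (since $\md^{1/2}$, $\ma^{-1}$, $\ma^{-1/2}$ are invertible) only if $H=0$. Hence $h_2$ is strictly convex, so $h=h_1+h_2$ is strictly convex. (Alternatively, one may invoke operator convexity of $\ma\mapsto\ma^{-1}$ to get convexity of each $\vu^{T}\ma^{-1}\vu$ and hence of $h_2=\sum_k\lambda_k\vu_k^{T}\ma^{-1}\vu_k$ after eigendecomposing $\md$, supplemented by the Hessian computation for strictness.)

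\textbf{Geodesic convexity.} The key organizing idea is that $h_2$ can be reduced to $h_1$: since inversion is a geodesic symmetry of the SPD manifold, namely $(\ma\sharp_t\mb)^{-1}=\ma^{-1}\sharp_t\mb^{-1}$ (checked by expanding both sides), the map $\ma\mapsto\trace(\ma^{-1}\md)$ is the composition of $\mm\mapsto\trace(\mm\md)$ with $\ma\mapsto\ma^{-1}$, so it is strictly geodesically convex as soon as $\mm\mapsto\trace(\mm\md)$ is. It therefore suffices to show: for $\mk\succ0$, the linear functional $\ma\mapsto\trace(\ma\mk)$ is strictly geodesically convex. For this I would establish the weighted operator arithmetic--geometric-mean inequality
\begin{equation*}
  \ma\sharp_t\mb \;\preceq\; (1-t)\ma + t\mb, \qquad t\in[0,1],
\end{equation*}
with equality for $t\in(0,1)$ if and only if $\ma=\mb$. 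Congruence by $\ma^{-1/2}$ reduces this, with $X:=\ma^{-1/2}\mb\ma^{-1/2}\succ0$, to $X^t\preceq(1-t)\mi+tX$, which is the scalar concavity bound $x^t\le(1-t)+tx$ applied to the eigenvalues of $X$; strictness of that scalar bound away from $x=1$ shows the matrix gap vanishes exactly when $X=\mi$, i.e. $\mb=\ma$. Given this, monotonicity of $\mm\mapsto\trace(\mm\mk)$ in the semidefinite order yields $\trace((\ma\sharp_t\mb)\mk)\le(1-t)\trace(\ma\mk)+t\trace(\mb\mk)$, and for $\ma\neq\mb$, $t\in(0,1)$ the gap equals $\trace(E\mk)$ with $E:=(1-t)\ma+t\mb-\ma\sharp_t\mb$ a nonzero positive semidefinite matrix, hence $>0$ since $\mk\succ0$. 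Thus $h_1=\trace(\ma\ms)$ is strictly geodesically convex, by the reduction so is $h_2$, and a sum of (strictly) geodesically convex functions is (strictly) geodesically convex, so $h$ is strictly geodesically convex.

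\textbf{Main obstacle.} Everything except the weighted operator AM--GM inequality is bookkeeping; that inequality, together with its equality case, is the one non-routine ingredient. I would either prove it inline via the scalar-to-spectral reduction above or cite it from the standard SPD-manifold literature. The equality analysis is the delicate point, since it is exactly what excludes geodesics along which $h$ is merely affine and thus is what upgrades geodesic convexity to \emph{strict} geodesic convexity.
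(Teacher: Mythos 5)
Your proof is correct and follows essentially the same route as the paper: split $h$ into the linear term and the inverse term, control the linear term by the operator arithmetic--geometric mean inequality, and transfer the result to the inverse term via the identity $(\ma \sharp_t \mb)^{-1} = \ma^{-1} \sharp_t \mb^{-1}$. The differences are ones of completeness rather than strategy: you prove the weighted AM--GM inequality (with its equality case) by the congruence-plus-spectral reduction instead of citing it, you work with general $t$ where the paper reduces to the midpoint and invokes continuity, and you verify Euclidean strict convexity by an explicit Hessian computation rather than a citation. One point where your version is genuinely more careful than the paper's: the paper asserts that $\trace(\ma^{-1}\md)$ is strictly convex without qualification, but (as your example of a rank-deficient $\md$ shows) Euclidean strictness of that term requires $\md \succ 0$; your observation that strict \emph{geodesic} convexity survives with only $\ms \succ 0$, because the linear term alone is strictly geodesically convex and sums of strictly geodesically convex and geodesically convex functions remain strictly geodesically convex, is exactly the right way to patch this.
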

\begin{proof}
  The first term in~\eqref{firstcost} is linear, hence convex, while the second term is strictly convex~\cite[Ch.~3]{boyd2004convex}, viewing SPD matrices as a convex cone~\cite[see][Thm.~2.6]{rockafellar1970convex}. Thus, strict convexity of $h(\ma)$ is obvious. Therefore, we concentrate on proving its strict geodesic convexity. Using continuity, it suffices to show \emph{midpoint strict convexity}, namely
\begin{equation*}
  h(\ma \sharp_{1/2} \mb) < \tfrac12 h(\ma) + \tfrac12 h(\mb).
\end{equation*}
It is well-known~\cite[Ch.~4]{bhatia07} that for two distinct SPD matrices, we have the operator inequality
\begin{equation}
\ma \sharp_{1/2} \mb \prec \tfrac12 \ma + \tfrac12 \mb.
\end{equation}
Since $\ms$ is SPD, is immediately follows that
\begin{equation}
  \label{eq:4}
  \trace \bigl ((\ma \sharp_{1/2} \mb) \ms) < \tfrac12\trace (\ma \ms) + \tfrac12\trace (\mb \ms).
\end{equation}
From the definition of $\sharp_t$, a brief manipulation shows that 
\begin{equation*}
  (\ma \sharp_{t} \mb)^{-1} = \ma^{-1} \sharp_t \mb^{-1}.
\end{equation*}
Thus, in particular for the midpoint (with $t=1/2$) we have
\begin{equation}
  \label{eq:5}
  \trace \bigl ((\ma \sharp_{1/2} \mb)^{-1} \md) < \tfrac12\trace (\ma^{-1} \md) + \tfrac12\trace (\mb^{-1} \md).
\end{equation}
Adding~\eqref{eq:4} and \eqref{eq:5}, we obtained the desired result.
\end{proof} \vspace{1cm}

\textbf{Solution via geometric mean.}\; The optimal solution to~\eqref{firstcost} will reveal one more reason why we invoke geodesic convexity. Since the constraint set of \eqref{firstcost} is open and the objective is strictly convex, to find its global minimum, it is enough to find a point where the gradient $\nabla h$ vanishes. Differentiating with respect to $\ma$, this yields
\begin{equation*}
  \nabla h(\ma) = \ms - \ma^{-1}\md \ma^{-1}.
\end{equation*}
Setting this gradient to zero results in the equation
\begin{equation}
  \label{eq:6}
  \nabla h(\ma) = 0\ \Longrightarrow \; \ma \ms \ma=\md.
\end{equation}
Equation~\eqref{eq:6} is a Riccati equation whose unique solution is nothing but the midpoint of the geodesic joining $\ms^{-1}$ to $\md$ (see e.g.,~\citet[1.2.13]{bhatia07}). Indeed, 
\begin{align*}
  \ma &= \ms^{-1} \sharp_{1/2} \; \md = \ms^{-1/2}(\ms^{1/2}\md \ms^{1/2})^{1/2}\ms^{-1/2}. 
\end{align*}
Observe by construction this solution is SPD, therefore, the constraint of optimization is satisfied. 

It is this fact that the solution to \algo is given by the midpoint of the geodesic joining the inverse of the second moment matrix of similar points to the second moment matrix of dissimilar points, which gives \algo its name: the midpoint of this geodesic is known as the \emph{matrix geometric mean} and is a very important object in the study of SPD matrices~\cite[Ch.~6]{bhatia07}.

\subsection{Regularized version}
\label{sec:regul}
We have seen that the solution of our method is the geometric mean between  $\ms^{-1}$ and $\md$. However, in practice the matrix $\ms$ might sometimes be non-invertible or near-singular. To address this concern, we propose to add a regularizing term to the objective function. This regularizer term can also be used to incorporate prior knowledge about the distance function. In particular, we propose to use
\begin{equation}
\label{eq.regcost}
\min_{\ma \succ 0} \quad \lambda D_{\text{sld}}(\ma,\ma_0) + \trace(\ma \ms) + \trace(\ma^{-1}\md),
\end{equation}
where $\ma_0$ is the ``prior'' (SPD matrix) and $D_{\text{sld}}(\ma,\ma_0)$ is the symmetrized LogDet divergence: $D_{\ld}(\ma,\ma_0)+D_{\ld}(\ma_0,\ma)$, which is equal to 
\begin{equation}
  \label{eq:7}
D_{\text{sld}}(\ma,\ma_0) := \trace(\ma \ma_0^{-1}) + \trace(\ma^{-1}\ma_0) - 2d,
\end{equation}
where $d$ is the dimensionality of the data.
Interestingly, using~\eqref{eq:7} and following the argument as above, we see that the minimization problem in~\eqref{eq.regcost} with this regularizer also has a closed form solution. After straightforward computations, we obtain the following solution
\begin{equation}
\label{eq:reg}
\ma_{\text{reg}} = (\ms + \lambda \ma_0^{-1})^{-1} \sharp_{1/2} \; (\md + \lambda \ma_0),
\end{equation}
the regularized geometric mean of suitably modified $\ms$ and $\md$ matrices. Observe that as the regularization parameter $\lambda \geq 0$ increases, $\ma_{\text{reg}}$ becomes more similar to $\ma_0$.

\subsection{Extension to weighted geometric mean}
The geodesic viewpoint is also key to deciding how one may assign different ``weights'' to the matrices $\ms$ and $\md$ when computing the \algo solution. This viewpoint is important because merely scaling the cost in \eqref{firstcost} to change the balance between $\ms$ and $\md$ is not meaningful as it only scales the resulting solution $\ma$ by a constant.

Given the geometric nature of the \algo's solution, we replace the linear cost in~\eqref{firstcost} by a nonlinear one guided by Riemannian geometry of the SPD manifold. The key insight into obtaining a weighted version of \algo comes from a crucial geometric observation. \emph{The minimum of~\eqref{firstcost} is also the minimum to the following optimization problem}:
\begin{equation}
\label{eq:8}
\min_{\ma \succ 0} \quad \delta_{R}^2(\ma,\ms^{-1}) + \delta_{R}^2(\ma,\md),
\end{equation}
where $\delta_{R}$ denotes the Riemannian distance
\begin{equation*}
\delta_{R}(\mx,\my) := \norm{\log(\my^{-1/2}\mx\my^{-1/2})}_{\text{F}} \quad \text{for}\; \mx,\my \succ 0,
\end{equation*}
on SPD matrices and $\norm{\cdot}_{\text{F}}$ denotes the Frobenius norm.

Once we identify the solution of \eqref{firstcost} with that of~\eqref{eq:8}, the generalization to the weighted case becomes transparent. We introduce a parameter that characterizes the degree of balance between the cost terms of similarity and dissimilarity data. The weighted \algo formulation is then
\begin{equation}
  \label{eq:9}
  \min_{\ma \succ 0} \quad h_t(\ma):=(1-t)\; \delta_{R}^2(\ma,\ms^{-1}) + t\; \delta_{R}^2(\ma,\md),
\end{equation}
where $t$ is a parameter that determines the balance. Unlike~\eqref{firstcost}, which we observed to be strictly convex as well as strictly geodesically convex, problem~\eqref{eq:9} is \emph{not} (Euclidean) convex. Fortunately, it is still geodesically convex, because $\delta_R$ itself is geodesically convex. The proof of the geodesic convexity of $\delta_R$ is more involved than that of Theorem~\ref{thm:gcvx}, and we refer the reader to~\cite[Ch.~6]{bhatia07} for complete details.

It can be shown, see e.g.,~\cite[Ch.~6]{bhatia07} that the unique solution to~\eqref{eq:9} is the weighted geometric mean
\begin{equation}
  \ma = \ms^{-1} \sharp_{t} \; \md,
\end{equation}
that is, a point on the geodesic from $\ms^{-1}$ and $\md$. Figure~\ref{fig.manifold} illustrates this fact about the solution of \algo.

\begin{figure}[tbp] 
\centering
\vspace{-1.4cm} 
\newlength\figureheight 
\newlength\figurewidth
\setlength\figureheight{6cm} 
\setlength\figurewidth{6cm} 
\scalebox{.6}{\begin{tikzpicture}
\tikzset{newstyle/.style={thick}}
\tikzset{simple/.style={thick}}

\definecolor{c1}{RGB}{221,52,151}
	\begin{pgfonlayer}{nodelayer}
		\node [style=newstyle] (0) at (-10.75, -3.75) {};
		\node [style=newstyle] (1) at (-4.25, -6.75) {};
		\node [style=newstyle] (2) at (0.25, -5.75) {};
		\node [style=newstyle] (3) at (-6.8, -2.25) {};
		\node [style=newstyle] (4) at (-8.79, -0.13) {};
		\node [style=newstyle] (5) at (-6.5, -0.5) {};
		\node [style=newstyle] (6) at (-3, -4) {};
	\end{pgfonlayer}
	\begin{pgfonlayer}{edgelayer}
		\draw [line width=1.2pt, in=120, out=75, looseness=2.50] (0.center) to (1.center) node[below right] {\huge $\mathbb{S}_{+}$};
		\draw [line width=1.2pt, in=135, out=75] (1.center) to (2.center);
		\draw [line width=1.2pt, in=105, out=9, looseness=1.25] (4.center) to (2.center);
		\draw [line width=1.2pt, in=30, out=180, looseness=1.25] (3.center) to (0.center);
		\draw [color=c1, line width=2pt, in=105, out=15] (5.center) to (6.center);
		\node (c) at (-2.74,-2.85) {\Large $\gamma(t)$};
		\fill (-2.97,-4.1)  circle[radius=4pt];
		\fill (-6.6,-0.52)  circle[radius=4pt];
		\node (c) at (-3.15,-4.6) {\Large $\ms^{-1}$};
		\node (c) at (-6.7,-1) {\Large $\md$};
		\node (c) at (-4.16,-1.45) {\huge $\text{+}$};
		\node at (-4.16,-1.45) [line width=1.4pt, cross out, draw, inner sep=3.2pt]{};
		\node (c) at (-3.65,-1.35) {\Large $\ma$};
	\end{pgfonlayer}
\end{tikzpicture} }
\caption{The solution of \algo is located in the geodesic between matrices $\ms^{-1}$ and $\md$ on the manifold of SPD matrices.\label{fig.manifold}
} 
\end{figure}
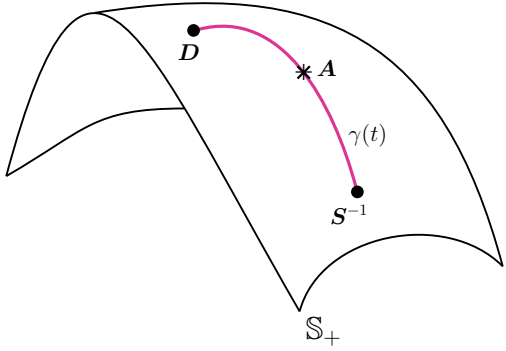




The regularized form of the previous solution is given by
\begin{equation*}
\ma_{\text{reg}} = (\ms + \lambda \ma_0^{-1})^{-1} \sharp_{t} \; (\md + \lambda \ma_0),
\end{equation*}
for $t \in [0,1]$. In the cases where $t = 1/2$, it is equal to~\eqref{eq:reg}. This solution is our final and complete proposed solution to the linear metric learning problem. The summary of our \algo algorithm for metric learning is presented in Algorithm~\ref{alg:GMML}. Empirically, we have observed that the generalized solution (with free $t$) can significantly outperform the ordinary solution.

There are several approaches for fast computation of Riemannian geodesics for SPD matrices, for instance, Cholesky-Schur and scaled Newton methods~\cite{iannazzo2011geometric}. We use Cholesky-Schur method in our paper to expedite the computation of Riemannian geodesics.

\begin{algorithm}[tb]
   \caption{Geometric Mean Metric Learning}
   \label{alg:GMML}
\begin{algorithmic}
   \STATE {\bfseries Input:} $\mathcal{S}$: set of similar pairs, $\mathcal{D}$: set of dissimilar pairs,
   $t$: step length of geodesic, $\lambda$: regularization parameter, $\ma_0$: prior knowledge   
   \STATE {\bfseries Compute the similarity and dissimilarity matrices:}
\begin{align*}
\ms &= \sum_{(\vx_i , \vx_j) \in \mathcal{S}}{(\vx_i-\vx_j)(\vx_i-\vx_j)^{T}} \\
\md &= \sum_{(\vx_i , \vx_j) \in \mathcal{D}}{(\vx_i-\vx_j)(\vx_i-\vx_j)^{T}}
\end{align*}
\vspace{-0.2cm}
\STATE {\bfseries Return the distance matrix:}
\begin{equation*}
\ma = (\ms + \lambda \ma_0^{-1})^{-1} \sharp_{t} \; (\md + \lambda \ma_0)
\end{equation*}
\end{algorithmic}
\end{algorithm}

    


\section{Results}
\label{sec:results}
In this section, we compare the performance of the proposed method \algo (Algorithm~\ref{alg:GMML}) to some well-known metric learning algorithms:
\begin{itemize}
\setlength{\itemsep}{0.5pt}
\item ITML~\cite{davis2007information};
\item LMNN~\cite{weinberger2009distance}; and
\item FlatGeo with batch flat geometry~\cite{meyer2011regression}.
\end{itemize}

We exploit the commonly used criterion for comparing the performance of different methods, that is, the rate of the classification error for a $k$-NN classifier on different datasets. We choose $k=5$, and estimate a full-rank matrix $\ma$ in all methods.

\subsection{Experiment 1}
Assume $c$ to be the number of classes, it is common in practice to generate $40c(c-1)$ number of constraints by randomly choosing $40c(c-1)$ pairs of points in a dataset. In our first experiment, shown in Figure~\ref{fig.small}, we use this number of constraints in our method in addition to ITML and FlatGeo methods. The LMNN method does not have this number of constraints parameter and we used a new version of its toolbox that uses Bayesian optimization for optimizing the model hyper-parameters. We use the default parameters used in ITML and FlatGeo, except we also use a minimum iterations of $10^4$ for the FlatGeo method, because we observed that sometimes FlatGeo stops prematurely leading to a very poor performance. ITML has a regularization parameter that is set by using cross-validation.






\begin{figure*}[htbp] 
\centering 
\setlength\figureheight{6cm} 
\setlength\figurewidth{6cm} 
\scalebox{.75}{\begin{tikzpicture}
\begin{axis}[scale=2.6,
axis lines*=left,
width=12.023cm,
height=2.1in,
bar width = .32cm,
enlargelimits=0.07,
    axis lines*=left,
    width=10cm,
    tick align=inside,
    tick style={draw=none},
    xtick={4,6,8,10,12,14,16,18,20,22},
    xticklabels={%
        \small{Wine} \\ \footnotesize{$d=13$, $c=3$ $n=178$},
        \small{Pima-diabetes} \footnotesize{$d=8$, $c=2$ $n=768$},
        \small{Vehicle} \\ \footnotesize{$d=18$, $c=4$ $n=846$},
        \small{Vowel} \\ \footnotesize{$d=14$, $c=11$ $n=990$},
        \small{German} \\ \footnotesize{$d=24$, $c=2$ $n=1000$},
        \small{Australian} \footnotesize{$d=14$, $c=2$ $n=690$},
        \small{Protein} \\ \footnotesize{$d=20$, $c=6$ $n=116$},
        \small{Iris} \\ \footnotesize{$d=4$, $c=3$ $n=150$},
        \small{Breast-Cancer} \footnotesize{$d=9$, $c=2$ $n=699$},
        \small{Segment} \\ \footnotesize{$d=19$, $c=7$ $n=2310$}},
    xticklabel style = { font=\scriptsize, text width=1.99cm, align=center },
    ybar=0pt,
    ymin = 3.5,
    ymax = 52.5,
ytick={0,  5,  10,  15,   20,   25,  30,  35,   40,   45,   50,   55},
ylabel={Classification Error ($\%$)},
    legend image code/.code={%
                    \draw[#1, draw=none] (0cm,-0.1cm) rectangle (0.4cm,0.17cm);
                },  
                legend style={
                    text depth=0pt,
                    at={(0.825,0.83)},
                    anchor=north west,
                    default spacing:
                    /tikz/column 2/.style={column sep=0pt,font=\bfseries},
                    %
                    /tikz/every odd column/.append style={column sep=0cm},
                },
    ]

\definecolor{c1}{RGB}{96,188,159}
\definecolor{c2}{RGB}{252,141,98}
\definecolor{c3}{RGB}{141,160,203}
\definecolor{c4}{RGB}{231,138,195}
\definecolor{c6}{RGB}{255,217,47}

\addplot[
    fill=c1,
    draw=black,
    point meta=y,
    every node near coord/.style={inner ysep=5pt},
    error bars/.cd,
        y dir=both,
        y explicit
] 
table [y error=error] {
x      y     error
4   3.295   1.130
6   27.504    1.322
8   22.521    1.252
10   43.050    2.783
12   27.590    0.957
14  16.004   0.893
16   33.304    3.213
18   2.988   1.117
20   3.547    0.566
22   2.584   0.3622
}; \addlegendentry{GMML}

\addplot[
    fill=c2,
    draw=black,
    point meta=y,
    every node near coord/.style={inner ysep=5pt},
    error bars/.cd,
        y dir=both,
        y explicit
] 
table [y error=error] {
x      y     error
4   8.665    2.047
6   27.952    1.358
8   23.445    1.358
10  48.971    6.758
12  28.900    1.112
14  33.550    3.100
16   34.202    5.261
18   5.2   0.4
20  5.601    0.810
22  2.976   0.369
}; \addlegendentry{LMNN}

\addplot[
    fill=c3,
    draw=black,
    point meta=y,
    every node near coord/.style={inner ysep=5pt},
    error bars/.cd,
        y dir=both,
        y explicit
] 
table [y error=error] {
x       y      error
4   7.584    2.543
6   28.372    1.078
8   29.929    2.175
10   43.722    2.592
12   29.180    1.465
14   33.289    2.365
16   34.224    4.003
18   3.466   1.4673
20   6.766    0.575
22   4.805   0.910
}; \addlegendentry{ITML}

\addplot[
    fill=c4,
    draw=black,
    point meta=y,
    every node near coord/.style={inner ysep=5pt},
    error bars/.cd,
        y dir=both,
        y explicit
] 
table [y error=error] {
x       y     error
4   3.920    0.327
6   27.734    1.932
8   27.955  1.559
10   49.207    1.389
12   30.810    0.964
14   34.2188    3.044
16   34.741    3.149
18   3.666   1.0062
20   6.327    1.051
22   3.896  0.572
}; \addlegendentry{FlatGeo}

\addplot[
    fill=c6,
    draw=black,
    point meta=y,
    every node near coord/.style={inner ysep=5pt},
    error bars/.cd,
        y dir=both,
        y explicit
] 
table [y error=error] {
x       y    error
4   30.632    2.192
6   27.744    1.212
8   38.153   1.294
10   42.207    0
12   30.857    1.512
14   32.351    1.719
16   36.638    3.981
18   3.8   1.2002
20   7.148    0.668
22   4.329   0
}; \addlegendentry{Euclidean}

\end{axis}
\end{tikzpicture} }
\caption{Classification error rates of $k$-nearest neighbor classifier via different learned metrics for different small datasets. Numbers below each correspond to the dimensionality of feature space in the data ($d$), number of classes ($c$) and number of total data ($n$).\label{fig.small}
} 
\end{figure*}

Figure~\ref{fig.small} reports the results for the smaller datasets. The datasets are obtained from the well-known UCI repository~\cite{asuncion2007uci}. In the plot, the baseline of using Euclidean distance for classification is shown in yellow. It can be seen that \algo outperforms the other three metric learning methods. 

The figure reports 40 runs of a two-fold splitting of the data. In each run, the data is randomly divided into two equal sets. The regularization parameter $\lambda$ is set to zero for most of the datasets. We only add a small value of $\lambda$ when the similarity matrix $\ms$ becomes singular. For example, since the similarity matrix of the Segment data is near singular, we use the regularized version of our method with $\lambda=0.1$ and $\ma_0$ equals to the identity matrix.

We use five-fold cross-validation for choosing the best parameter $t$. We tested 18 different values for $t$ in a two-step method. In the first step the best $t$ is chosen among the values $\{0.1, 0.3, 0.5, 0.7, 0.9\}$. Then in the second step, 12 values of $t$ are tested within an interval of length $0.02$ in the window around the previously selected point.

Figure~\ref{fig.Effects} shows the effect of the parameter $t$ on the average accuracy of $k$-NN classifier for five datasets. These datasets are also appeared in Figure~\ref{fig.small}. It is obvious that in some datasets, going from the ordinary version to the extended version can make the \algo's performance substantially better. Observe that each curve has a convex-like shape with some wiggling. That is why we choose the above approach for finding the best $t$, and we can verify its precision by comparing Figures~\ref{fig.small} and~\ref{fig.Effects}.

\begin{figure}[htbp] 
\centering 
\setlength\figureheight{6cm} 
\setlength\figurewidth{6cm} 
\scalebox{.65}{
%
%
\definecolor{mycolor1}{rgb}{1.00000,1.00000,0.00000}%
\begin{tikzpicture}

\begin{axis}[scale=0.7,
width=6.023in,
height=4.75in,
at={(1.01in,0.641in)},
scale only axis,
separate axis lines,
every outer x axis line/.append style={black},
every x tick label/.append style={font=\color{black}},
xmin=0,
xmax=1,
xlabel={$t$},
every outer y axis line/.append style={black},
every y tick label/.append style={font=\color{black}},
ymin=0,
ymax=50,
ytick={0,  5,  10,  15,    20,   25,  30,  35,    40,    45,  50},
ylabel={Classification Error ($\%$)},
axis background/.style={fill=white},
legend style={at={(0.65,0.49)},anchor=north west,legend cell align=left,align=left,draw=black}
]

\definecolor{c1}{RGB}{127,201,127}
\definecolor{c2}{RGB}{190,174,212}
\definecolor{cj4}{RGB}{253,192,134}
\definecolor{c5}{RGB}{56,108,176}
\definecolor{c3}{RGB}{240,2,127}
\definecolor{c4}{RGB}{191,91,23}

\addplot [color=c5,solid,line width=1.5pt]
  table[row sep=crcr]{%
0	2.81573033707919\\
0.02	2.7797752808994\\
0.04	2.9382022471915\\
0.06	3.257303370787\\
0.08	3.57415730337124\\
0.1	3.92247191011282\\
0.12	4.38651685393301\\
0.14	4.89101123595538\\
0.16	5.24831460674188\\
0.18	5.72808988764074\\
0.2	6.73370786516881\\
0.22	8.08089887640475\\
0.24	9.6056179775282\\
0.26	11.2337078651687\\
0.28	13.0808988764045\\
0.3	15.2719101123596\\
0.32	17.5730337078651\\
0.34	19.88202247191\\
0.36	22.1977528089886\\
0.38	24.356179775281\\
0.4	25.9786516853933\\
0.42	27.4314606741575\\
0.44	28.6101123595508\\
0.46	29.6067415730339\\
0.48	30.5393258426969\\
0.5	30.8910112359553\\
0.52	31.3483146067418\\
0.54	31.688764044944\\
0.56	31.9415730337081\\
0.58	32.1898876404496\\
0.6	32.4022471910114\\
0.62	32.6224719101125\\
0.64	32.503370786517\\
0.66	32.6921348314609\\
0.68	32.6685393258429\\
0.7	32.8067415730339\\
0.72	32.9022471910115\\
0.74	32.9988764044946\\
0.76	32.965168539326\\
0.78	32.8786516853934\\
0.8	33.0022471910114\\
0.82	32.9685393258429\\
0.84	33.0157303370788\\
0.86	33.1348314606744\\
0.88	33.0483146067418\\
0.9	33.0224719101125\\
0.92	33.1370786516856\\
0.94	33.1067415730339\\
0.96	32.9235955056182\\
0.98	33.0707865168541\\
1	33.0561797752811\\
};
\addlegendentry{Wine};

\addplot [color=c3,solid,line width=1.5pt]
  table[row sep=crcr]{%
0	22.0947990543736\\
0.02	21.9477541371159\\
0.04	21.9888888888889\\
0.06	21.8732860520095\\
0.08	22.1851063829787\\
0.1	22.5557919621749\\
0.12	22.8120567375886\\
0.14	23.1647754137116\\
0.16	23.5978723404256\\
0.18	23.9737588652483\\
0.2	24.4617021276596\\
0.22	24.976122931442\\
0.24	25.5931442080378\\
0.26	26.1624113475177\\
0.28	26.8761229314421\\
0.3	27.6647754137116\\
0.32	28.4111111111111\\
0.34	29.3638297872341\\
0.36	30.2359338061465\\
0.38	31.2307328605201\\
0.4	32.1827423167849\\
0.42	33.2141843971632\\
0.44	34.2444444444445\\
0.46	35.3215130023641\\
0.48	36.4021276595744\\
0.5	37.2671394799054\\
0.52	38.2858156028369\\
0.54	39.0832151300236\\
0.56	39.8652482269504\\
0.58	40.6170212765957\\
0.6	41.4706855791962\\
0.62	42.0907801418439\\
0.64	42.7198581560283\\
0.66	43.447281323877\\
0.68	44.0004728132388\\
0.7	44.5427895981087\\
0.72	45.1082742316785\\
0.74	45.7167848699763\\
0.76	45.8437352245862\\
0.78	46.4222222222222\\
0.8	46.8548463356973\\
0.82	47.0269503546098\\
0.84	47.3245862884161\\
0.86	47.4326241134751\\
0.88	47.641134751773\\
0.9	47.7139479905437\\
0.92	47.7827423167848\\
0.94	48.0791962174941\\
0.96	48.111111111111\\
0.98	48.2026004728132\\
1	48.269976359338\\
};
\addlegendentry{Vehicle};

\addplot [color=c4,solid,line width=1.5pt]
  table[row sep=crcr]{%
0	29.7313999999999\\
0.02	29.3458\\
0.04	29.004\\
0.06	28.6504\\
0.08	28.4478\\
0.1	28.1974\\
0.12	27.931\\
0.14	27.772\\
0.16	27.5987999999999\\
0.18	27.4813999999999\\
0.2	27.4133999999999\\
0.22	27.2173999999999\\
0.24	27.1973999999999\\
0.26	27.1651999999999\\
0.28	27.3723999999999\\
0.3	27.6049999999999\\
0.32	27.7955999999999\\
0.34	28.2866\\
0.36	28.5831999999999\\
0.38	29.1246\\
0.4	29.4372\\
0.42	29.7424\\
0.44	30.256\\
0.46	30.5414\\
0.48	31.0364\\
0.5	31.266\\
0.52	31.6076\\
0.54	31.8093999999999\\
0.56	32.1902\\
0.58	32.335\\
0.6	32.5948\\
0.62	32.7412000000001\\
0.64	32.8088\\
0.66	33.0526\\
0.68	33.0202\\
0.7	33.1424000000001\\
0.72	33.2712\\
0.74	33.3004000000001\\
0.76	33.3572\\
0.78	33.4394000000001\\
0.8	33.3714\\
0.82	33.3676000000001\\
0.84	33.5176\\
0.86	33.4696\\
0.88	33.5984\\
0.9	33.4823999999999\\
0.92	33.5492\\
0.94	33.5544\\
0.96	33.6668\\
0.98	33.6744\\
1	33.6898\\
};
\addlegendentry{German};

\addplot [color=c1,solid,line width=1.5pt]
  table[row sep=crcr]{%
0	37.0637931034483\\
0.02	36.1189655172415\\
0.04	35.9224137931035\\
0.06	35.2948275862071\\
0.08	35.0534482758623\\
0.1	34.386206896552\\
0.12	34.4741379310347\\
0.14	34.0775862068968\\
0.16	33.6275862068968\\
0.18	33.2517241379313\\
0.2	32.9344827586209\\
0.22	33.1034482758623\\
0.24	32.9689655172416\\
0.26	32.7741379310347\\
0.28	32.6051724137933\\
0.3	32.5103448275865\\
0.32	32.4224137931037\\
0.34	32.03275862069\\
0.36	32.2568965517244\\
0.38	32.1982758620693\\
0.4	32.3241379310347\\
0.42	32.3517241379313\\
0.44	32.4758620689658\\
0.46	32.2655172413796\\
0.48	32.6275862068968\\
0.5	32.9448275862071\\
0.52	33.2310344827589\\
0.54	33.4793103448278\\
0.56	33.4586206896554\\
0.58	33.9965517241381\\
0.6	33.9500000000002\\
0.62	34.3310344827588\\
0.64	34.9655172413794\\
0.66	35.0810344827587\\
0.68	35.1\\
0.7	35.3637931034484\\
0.72	35.9172413793104\\
0.74	36.6051724137932\\
0.76	36.898275862069\\
0.78	36.9206896551725\\
0.8	37.9086206896552\\
0.82	37.8775862068966\\
0.84	38.6448275862069\\
0.86	38.9258620689654\\
0.88	39.4258620689655\\
0.9	39.4672413793103\\
0.92	40.6586206896551\\
0.94	40.7086206896551\\
0.96	40.9637931034482\\
0.98	41.4999999999999\\
1	41.7862068965517\\
};
\addlegendentry{Protein};

\addplot [color=c2,solid,line width=1.5pt]
  table[row sep=crcr]{%
0	4.59971387696721\\
0.02	4.5124463519314\\
0.04	4.44434907010023\\
0.06	4.30615164520763\\
0.08	4.21945636623767\\
0.1	4.15107296137347\\
0.12	4.10987124463534\\
0.14	4.02689556509316\\
0.16	3.98369098712462\\
0.18	3.91244635193151\\
0.2	3.84005722460673\\
0.22	3.79399141630921\\
0.24	3.75250357653797\\
0.26	3.71673819742505\\
0.28	3.67553648068684\\
0.3	3.58111587982844\\
0.32	3.57110157367677\\
0.34	3.52246065808309\\
0.36	3.45779685264674\\
0.38	3.42947067238923\\
0.4	3.3979971387697\\
0.42	3.34506437768238\\
0.44	3.32703862660947\\
0.46	3.32246065808293\\
0.48	3.2844062947067\\
0.5	3.27868383404869\\
0.52	3.24177396280412\\
0.54	3.27439198855508\\
0.56	3.26294706723896\\
0.58	3.2380543633762\\
0.6	3.28612303290416\\
0.62	3.23748211731051\\
0.64	3.26552217453498\\
0.66	3.23261802575107\\
0.68	3.24091559370527\\
0.7	3.25894134477821\\
0.72	3.28526466380544\\
0.74	3.25779685264669\\
0.76	3.27439198855509\\
0.78	3.24320457796849\\
0.8	3.25951359084399\\
0.82	3.2726752503576\\
0.84	3.23319027181682\\
0.86	3.29270386266096\\
0.88	3.30701001430611\\
0.9	3.29241773962798\\
0.92	3.29670958512162\\
0.94	3.27839771101576\\
0.96	3.29098712446347\\
0.98	3.28011444921309\\
1	3.29270386266093\\
};
\addlegendentry{Breast-Cancer};

\end{axis}
\end{tikzpicture}
\caption{Classification error rates of $k$-nearest neighbor classifier along with \algo for different values of the parameter $t$. We analyze five datasets here, which is also appeared in Figure~\ref{fig.small}.\label{fig.Effects}
} 
\end{figure}
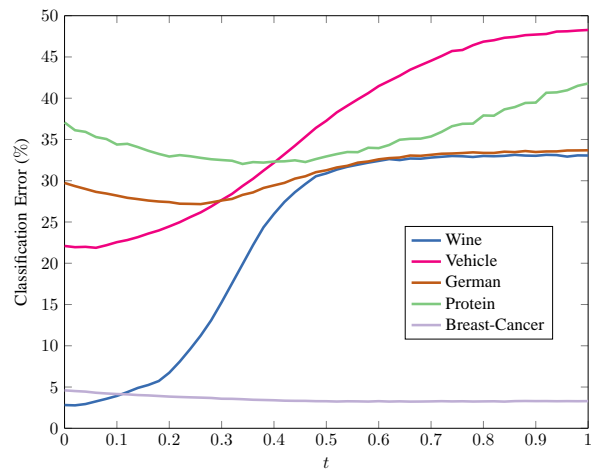

\subsection{Experiment 2}
To evaluate the performance of our method on larger datasets, we conduct a second set of experiments. The results can be summarized in Figure~\ref{fig.big}. The datasets in this experiment are Isolet, Letters~\cite{asuncion2007uci},  MNIST\footnote{We used a smaller version of the MNIST dataset available in www.cad.zju.edu.cn/home/dengcai/Data/MLData.html}~\cite{lecun1998gradient} and USPS~\cite{le1990handwritten}. 

Figure~\ref{fig.big} reports the average classification error over 5 runs of random splitting of the data. We use three-fold cross-validation for adjusting the parameter $t$. Since the similarity matrices of the MNIST data were not invertible, we use the regularized version of our method with regularization parameter $\lambda=0.1$. The prior matrix $\ma_0$ is set to the identity matrix.

On two of the large datasets, Letters and USPS, our method achieves the same performance as the best competing method that is LMNN. For one of the datasets our method significantly outperforms LMNN, and in one dataset it is significantly outdone by LMMN. We also observed that by using more data pairs for generating the similarity and dissimilarity matrices, the performance of our method on Isolet and MNIST datasets improves. We tested $1000c(c-1)$ for these two datasets, with which  we achieve about $1$ percent better accuracy for Isolet leading to slightly better performance than FlatGeo approach. For MNIST data, we achieved about $0.5$ percent better accuracy.

\begin{figure}[t] 
\centering  
\setlength\figureheight{6cm} 
\setlength\figurewidth{6cm} 
\scalebox{.65}{\begin{tikzpicture}
\begin{axis}[scale=1.43,
axis lines*=left,
width=8.023cm,
height=2.1in,
bar width = .38cm,
enlargelimits=0.145,
    axis lines*=left,
    width=10cm,
    tick align=inside,
    tick style={draw=none},
    xtick={6,8,10,12},
    xticklabels={%
        \small{USPS} \\ \footnotesize{$d=256$, $c=10$ $n=9298$},
        \small{MNIST} \\ \footnotesize{$d=784$, $c=10$ $n=4000$},
        \small{Isolet} \\ \footnotesize{$d=617$, $c=26$ $n=7797$},
        \small{Letters} \\ \footnotesize{$d=16$, $c=26$ $n=20000$}},
     xticklabel style = { font=\scriptsize, text width=2.2cm, align=center },
    ybar=0pt,
    ymin = 2.3,
    ymax = 18,
ytick={0,  5,  10,  15,    20},
ylabel={Classification Error ($\%$)},
    legend image code/.code={%
                    \draw[#1, draw=none] (0cm,-0.1cm) rectangle (0.4cm,0.17cm);
                },  
                legend style={
                    text depth=0pt,
                    at={(0.745,0.99)},
                    anchor=north west,
                    default spacing:
                    /tikz/column 2/.style={column sep=0pt,font=\bfseries},
                    %
                    /tikz/every odd column/.append style={column sep=0cm},
                },
    ]

\definecolor{c1}{RGB}{96,188,159}
\definecolor{c2}{RGB}{252,141,98}
\definecolor{c3}{RGB}{141,160,203}
\definecolor{c4}{RGB}{231,138,195}
\definecolor{c6}{RGB}{255,217,47}


\addplot[
    fill=c1,
    draw=black,
    point meta=y,
    every node near coord/.style={inner ysep=5pt},
    error bars/.cd,
        y dir=both,
        y explicit
] 
table [y error=error] {
x      y     error
6   2.942   0.124
8   11.900   0.271
10   6.892    0.643
12   4.092   0.172
}; \addlegendentry{GMML}

\addplot[
    fill=c2,
    draw=black,
    point meta=y,
    every node near coord/.style={inner ysep=5pt},
    error bars/.cd,
        y dir=both,
        y explicit
] 
table [y error=error] {
x      y     error
6   2.552   0.255
8   18.44   1.133
10   3.688    0.3175
12   4.082   0.088
}; \addlegendentry{LMNN}

\addplot[
    fill=c3,
    draw=black,
    point meta=y,
    every node near coord/.style={inner ysep=5pt},
    error bars/.cd,
        y dir=both,
        y explicit
] 
table [y error=error] {
x       y      error
6   3.057   0.393
8   13.85   1.115
10   8.654    0.72  
12   5.585   0.184
}; \addlegendentry{ITML}

\addplot[
    fill=c4,
    draw=black,
    point meta=y,
    every node near coord/.style={inner ysep=5pt},
    error bars/.cd,
        y dir=both,
        y explicit
] 
table [y error=error] {
x       y     error
6   7.272   1.916
8   14.21   1.522
10   5.9    0.092
12   7.280   0.489
}; \addlegendentry{FlatGeo}

\addplot[
    fill=c6,
    draw=black,
    point meta=y,
    every node near coord/.style={inner ysep=5pt},
    error bars/.cd,
        y dir=both,
        y explicit
] 
table [y error=error] {
x       y    error
6   3.585   0
8   14.100   0
10    8.916   0   
12   5.400   0
}; \addlegendentry{Euclidean}

\end{axis}
\end{tikzpicture} }
\caption{Classification error rates of $k$-nearest neighbor classifier via different learned metrics for large datasets.\label{fig.big}
} 
\end{figure}



The average running times of the methods on all large data sets and one small dataset are shown in Table~\ref{table.times}. The running time of different methods is reported for only one run of each algorithm for fixed values of hyper-parameters; that means, the reported run times \emph{do not} include the time required to select the hyper-parameters. All methods were implemented on \textsc{Matlab} R2014a (64-bit), and the simulations were run on a personal laptop with an Intel Core i5 (2.5Ghz) processor under the OS X Yosemite operating system.




It can be seen that our method is several order of magnitudes faster than other methods. In addition to obtaining good classification accuracy using the proposed method, the computational complexity of our method is another nice property making it an interesting candidate for large-scale metric learning.


\begin{table}
\scriptsize
\caption{Running time (in seconds) of metric learning methods}
\label{table.times}
\begin{center}
\begin{small}
\begin{sc}
\begin{tabular}{lccccr}
\hline
\vspace{0.01cm}
Data set & GMML & LMNN & ITML & FlatGeo \\
\hline
Segment    & 0.0054 & 77.595 & 0.511 & 63.074 \\
Letters     & 0.0137 & 401.90 & 7.053 & 13543 \\
USPS      & 0.1166 & 811.2 & 16.393 & 17424 \\
Isolet       & 1.4021 & 3331.9 & 1667.5 & 24855 \\
MNIST     & 1.6795 & 1396.4 & 1739.4 & 26640 \\
\hline
\end{tabular}
\end{sc}
\end{small}
\end{center}
\end{table}

\section{Conclusion and future work}
We revisited the task of learning a Euclidean metric from weakly supervised data given as pairs of similar and dissimilar points. Building on geometric intuition, we approached the task of learning a symmetric positive definite matrix by formulating it as a smooth, strictly convex optimization problem (thus, ensuring a unique solution). Remarkably, our formulation was shown to have a closed form solution. We also viewed our formulation as an optimization problem on the Riemannian manifold of SPD matrices, a viewpoint that proved crucial to obtaining a weighted generalization of the basic formulation. We also presented a regularized version of our problem. In all cases, the solution could be obtained as a closed form ``matrix geometric mean'', thus explaining our choice of nomenclature.

We experimented with several datasets, both large and small, in which we compared the classification accuracy of a $k$-NN classifier using metric learned via various competing methods. In addition to good classification accuracy and global optimality, our proposed method for solving the metric learning problem has other nice properties like being fast and being scalable with regard to both the dimensionality $d$ and the number of training samples $n$. 

Given the importance of metric learning to a vast number of applications, we believe that the new understanding offered by our formulation, its great simplicity, and its tremendous speedup over widely used methods make it attractive. 

\subsection{Future work}

Several avenues of future work are worth pursuing. We list some most promising directions below:
\begin{itemize}\vspace*{2pt}
  \setlength{\itemsep}{2pt}
\item To view our metric learning methods as a dimensionality reduction method; here the connections in~\cite{cunningham2015linear} may be helpful.
\item Extensions of our simple geometric framework to learn nonlinear and local metrics.
\item Applying the idea of using concurrently the Mahalanobis distance $d_{\ma}$ with its counterpart $\hat{d}_{\ma}$ on the other machine learning problems.
\end{itemize}

\bibliographystyle{IEEEtranN}
\bibliography{IEEEabrv,SML}

\end{document}